\documentclass{article}

\PassOptionsToPackage{numbers,sort&compress}{natbib}  
\usepackage[preprint]{neurips_2026}  

\usepackage[utf8]{inputenc}
\usepackage[T1]{fontenc}
\usepackage{hyperref}
\usepackage{url}
\usepackage{booktabs}
\usepackage{amsfonts}
\usepackage{amsmath}
\usepackage{amssymb}
\usepackage{amsthm}
\usepackage{nicefrac}
\usepackage{microtype}
\usepackage{xcolor}
\usepackage{verbatim}
\usepackage{graphicx}
\usepackage{algorithm}
\usepackage{algorithmic}
\usepackage{multirow}
\usepackage{makecell}
\usepackage{subcaption}
\usepackage[capitalize,noabbrev]{cleveref}

\usepackage{aliascnt}

\newtheorem{theorem}{Theorem}

\newaliascnt{proposition}{theorem}

\aliascntresetthe{proposition}

\newaliascnt{corollary}{theorem}

\aliascntresetthe{corollary}

\newaliascnt{lemma}{theorem}
\newtheorem{lemma}[lemma]{Lemma}
\aliascntresetthe{lemma}

\newaliascnt{definition}{theorem}

\aliascntresetthe{definition}

\newaliascnt{remark}{theorem}

\aliascntresetthe{remark}

\newtheorem{assumption}{Assumption}

\crefname{assumption}{Assumption}{Assumptions}
\crefname{theorem}{Theorem}{Theorems}
\crefname{proposition}{Proposition}{Propositions}
\crefname{corollary}{Corollary}{Corollaries}
\crefname{lemma}{Lemma}{Lemmas}
\crefname{definition}{Definition}{Definitions}
\crefname{remark}{Remark}{Remarks}

\Crefname{assumption}{Assumption}{Assumptions}
\Crefname{theorem}{Theorem}{Theorems}
\Crefname{proposition}{Proposition}{Propositions}
\Crefname{corollary}{Corollary}{Corollaries}
\Crefname{lemma}{Lemma}{Lemmas}
\Crefname{definition}{Definition}{Definitions}
\Crefname{remark}{Remark}{Remarks}

\newcommand{\R}{\mathbb{R}}
\newcommand{\E}{\mathbb{E}}
\newcommand{\N}{\mathcal{N}}

\DeclareMathOperator{\Tr}{tr}
\DeclareMathOperator{\divop}{div}
\DeclareMathOperator{\TV}{TV}
\DeclareMathOperator{\KL}{KL}

\newcommand{\vf}{\vec{f}_\theta}           
\newcommand{\vg}{\vec{g}_\psi}             
\newcommand{\vfaug}{\vec{f}_\theta^{\mathrm{aug}}}
\newcommand{\flow}[3]{\Phi_{#1,\,#2 \to #3}} 

\newcommand{\ptilde}[1][]{\tilde{p}_{\theta#1}}     
\newcommand{\ptheta}[1][]{p_{\theta#1}}             
\newcommand{\qpsi}[1][]{q_{\psi#1}}              

\newcommand{\Lma}{\mathcal{L}_{\mathrm{MA}}}

\newcommand{\DeltaStu}{\Delta^{\mathrm{Stu}}}
\newcommand{\Xstu}{X^{\mathrm{Stu}}}
\newcommand{\spsi}{s_\psi}                

\newcommand{\stopgrad}{\mathrm{stopgrad}}
\newcommand{\dd}{\mathrm{d}}

\newcommand{\cF}{{\cal F}}
\newcommand{\cL}{{\cal L}}

\title{Beyond Trajectory Matching: Reflow with Marginal Distribution Alignment}

\author{
  Chen Wang\thanks{Equal contribution.} \\
  Department of Statistics and Data Science, Tsinghua University \\
  \texttt{wangchen\_23@mails.tsinghua.edu.cn}
  \And
  Peiran Yun\footnotemark[1] \\
  Department of Computer Science and Technology, Tsinghua University \\
  \texttt{yunpr22@mails.tsinghua.edu.cn}
  \And
  Pan Xie \\
  Bytedance Inc \\
  \texttt{xiepan.01@bytedance.com}
  \And
  Ke Deng\thanks{Corresponding author.} \\
  Department of Statistics and Data Science, Tsinghua University \\
  \texttt{kdeng@tsinghua.edu.cn}
}

\begin{document}

\maketitle

\begin{abstract}
Diffusion and continuous-flow generative models achieve high-quality generation, and their deterministic sampling can be formulated as solving learned ODE dynamics. However, accurate ODE discretization often requires many steps, making efficient few-step generation a key challenge. Among acceleration strategies, reflow-based distillation simplifies teacher ODE trajectories so that a student model can approximate the teacher transport with fewer steps. We identify a theoretical limitation of this paradigm, namely that trajectory matching can under-determine the distribution induced by the student model. In particular, two student models can attain the same trajectory-matching loss while inducing different endpoint marginal distributions, which may lead to different generation quality. To address this limitation, we introduce a marginal-alignment regularizer that penalizes the discrepancy between the student-induced marginal and the corresponding teacher marginal at the endpoint of each distillation interval. The regularizer is computed by tracking log-density changes along the ODE induced by the student model and evaluating scores from the frozen teacher model, without requiring auxiliary trainable networks or adversarial optimization. The resulting framework applies uniformly to the reflow family, including vanilla reflow and piecewise reflow. We further prove a telescoping total-variation bound showing that local marginal alignment controls the final-time discrepancy between the student-induced and teacher-induced distributions. Experiments on benchmark backbones demonstrate the effectiveness of the proposed method for few-step generation.

\end{abstract}

\section{Introduction}
\label{sec:intro}

In recent years, diffusion models~\citep{ho2020ddpm,song2021sde,song2021ddim} and \emph{continuous normalizing flow} (CNF) models~\citep{chen2018neural,lipman2022flow,liu2022flow} have achieved remarkable progress in multimodal generation, enabling broad applications such as text-to-image generation~\citep{rombach2022ldm,podell2023sdxl}, text-to-video generation~\citep{ho2022video,blattmann2023align}, and image editing~\citep{meng2022sdedit,brooks2023instructpix2pix}.
A typical CNF-based generative model gradually transports samples drawn from a simple initial distribution to samples from the target distribution through an \emph{ordinary differential equation} (ODE) with a learned or designed velocity field~\citep{chen2018neural}.
Although diffusion models are usually formulated through stochastic noising and denoising processes, their deterministic sampling can be performed by solving probability flow ODEs (PF-ODEs)~\citep{song2021sde}. This ODE view provides a common language for discussing both model families.

In practice, however, ODE dynamics often vary substantially across time and space, so coarse discretization can introduce large numerical error and noticeably degrade sample quality. 
To reduce the number of sampling steps while preserving sample quality, many efforts have been made from this ODE-based sampling perspective.
Among common solutions, training-free sampler design~\citep{song2021ddim,lu2022dpm} attempts to improve the numerical solver without additional training, whereas distillation~\citep{salimans2022progressive,liu2022flow,song2023consistency} adapts the model and often provides better quality under very small step budgets.

Within the distillation route, reflow-style methods provide a principled paradigm for improving few-step sampling from generative ODEs by simplifying teacher ODE trajectories. Reflow~\citep{liu2022flow} introduced this paradigm by learning ODE dynamics that connect the endpoints of teacher trajectories with simpler paths, so that accurate sampling can be achieved with fewer Euler steps. Piecewise variants such as PeRFlow~\citep{yan2024perflow} extend this principle by partitioning the full time interval into several sub-intervals and simplifying trajectories within each window, thereby approaching piecewise linear probability flows. These methods have made few-step generation more effective, yet their trajectory-level objectives leave open a basic question: \emph{does trajectory-matching quality fully determine the quality of the induced marginals?} This question is central because generation quality ultimately depends on the induced marginals rather than merely on trajectory-level accuracy.

Our analysis gives a negative answer to this question. In practical distillation settings, velocity matching is typically imperfect due to finite model capacity and limited optimization. We prove that, under such imperfect matching, trajectory-level error does not fully determine the quality of the induced marginal distributions. We formalize this limitation in \cref{thm:underdetermination}: two student velocity fields can attain \emph{identical} trajectory MSE while inducing different marginal distributions at the interval endpoint, which can translate into different sample quality.

Motivated by this observation, we introduce a \emph{marginal-alignment regularizer} for reflow-style distillation. The regularizer directly aligns the student-induced marginal with the corresponding teacher marginal at each time-interval endpoint, and applies uniformly to the reflow family, with vanilla reflow and PeRFlow as special cases. As \Cref{fig:hero} conceptually illustrates, the regularizer complements the trajectory-matching loss by adding a distribution-level signal that favors better endpoint marginal alignment. We further show that the local marginal discrepancies penalized by the regularizer control the final-time discrepancy between the student-induced and teacher-induced distributions through a telescoping total-variation (TV) bound (\cref{thm:tvbound}). The computation of the regularizer relies only on log-density tracking along the ODE induced by the student model and score evaluations from the frozen teacher model, requiring no auxiliary trainable networks or adversarial optimization. Empirically, augmenting reflow-based distillation with marginal alignment consistently improves few-step generation on benchmark backbones such as SD~1.5 and SDXL.

In summary, our contributions are as follows:
\begin{enumerate}
    \item We identify a distribution-level under-determination in reflow-style distillation under imperfect velocity matching, and demonstrate it through a constructive example.
    \item We propose a marginal-alignment regularizer for the reflow family, which encourages interval-endpoint marginal alignment through an augmented ODE solve for the student dynamics and frozen-teacher score evaluations, without introducing additional trainable parameters.
    \item We provide a theoretical guarantee for marginal alignment, showing that interval-wise marginal discrepancies control the final-time discrepancy between the student-induced and teacher-induced distributions via a telescoping total-variation bound.
\end{enumerate}

\begin{figure}[t]
    \centering
    \includegraphics[width=\linewidth]{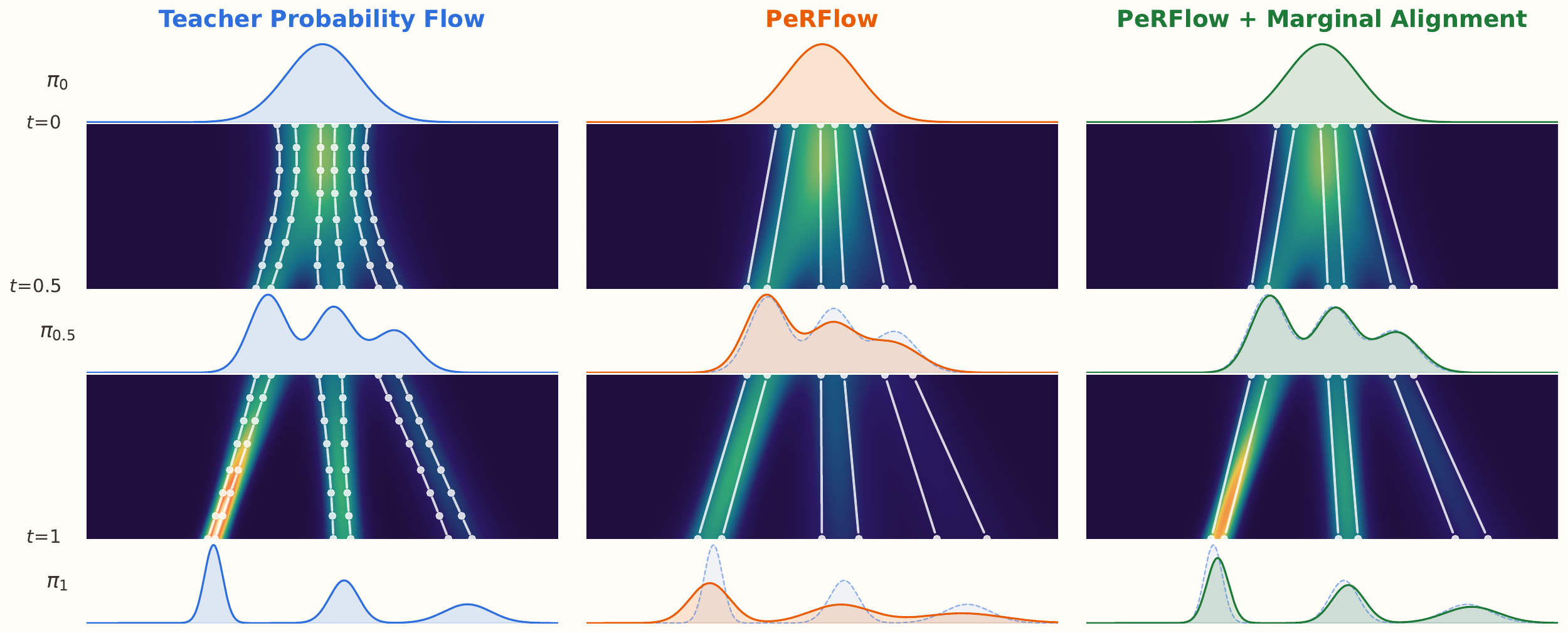}
    \caption{
    Conceptual overview on a 2D mixture-of-Gaussians setting.
    The PeRFlow student simplifies trajectories within each sub-interval but may leave boundary marginals misaligned, whereas adding marginal alignment encourages better agreement with the teacher marginals while keeping trajectories simple.
    Dashed curves denote teacher marginals.
    }
    \label{fig:hero}
\end{figure}

\section{Related Works}
\label{sec:related}

\subsection{Continuous Normalizing Flows}
Let $\pi_0 = \N(0,I_d)$ be the initial noise distribution, and $\pi_1$ be the data distribution.
A CNF evolves $\pi_0$ into a model distribution close to $\pi_1$ under deterministic ODE dynamics.
Given an initial random variable $X_0 \sim \pi_0$ and a time-varying velocity field $\vec{v}: \R^d \times [0,1] \to \R^d$, the trajectory $\{X_\tau\}_{\tau\in[0,1]}$ evolves according to
\begin{equation}\label{eq:ODE}
    \frac{\dd X_\tau}{\dd \tau} = \vec{v}(X_\tau,\tau), \qquad \tau\in[0,1].
\end{equation}
For any $0\leq s \leq t \leq 1$, solving the ODE in Equation~\eqref{eq:ODE} from the initial state-time pair $(x,s)$ to time $t$ yields a flow map $\flow{\vec{v}}{s}{t}$:
\begin{equation}\label{eq:FlowMap}
    \flow{\vec{v}}{s}{t}(x) := X_t,
    \qquad
    \frac{\dd X_\tau}{\dd \tau}
    =
    \vec{v}(X_\tau,\tau),
    \quad
    X_s=x,
    \quad \tau\in[s,t].
\end{equation}
For a neural velocity field $\vg(x,t)$ with parameters $\psi$, we denote the induced marginal at time $t$ along the CNF path by $q_{\psi,t} := \bigl(\flow{\vg}{0}{t}\bigr)_{\#}\pi_0$, where $(T)_{\#}\pi$ denotes the distribution of $T(X)$ for $X \sim \pi$.
In particular, $q_{\psi,0}=\pi_0$, and $q_{\psi,1}$ is the approximated model distribution by velocity field $\vg$.

When $q_{\psi,t}$ admits a density, its evolution is governed by the Liouville equation,
\begin{equation}
    \partial_\tau q_{\psi,\tau}(x) + \nabla_x \cdot \bigl(q_{\psi,\tau}(x)\vg(x,\tau)\bigr) = 0,
\end{equation}
which implies the instantaneous change-of-variables formula along a trajectory~\citep{chen2018neural,grathwohl2019ffjord}:
\begin{equation}
    \frac{\dd}{\dd \tau}\log q_{\psi,\tau}(X_\tau) = -\divop_x \vg(X_\tau,\tau),
    \label{eq:instantaneous_change_of_variables}
\end{equation}
where $\divop_x \vg$ is the trace of the Jacobian $\nabla_x \vg$.

In practice, the velocity field $\vg(x,t)$ modeled by a deep neural network with parameter $\psi$ is trained on a collection of representative samples from $\pi_1$ using objectives such as flow matching~\citep{lipman2022flow}. 
After training, the resulting model distribution is expected to approximate the data distribution, namely $q_{\psi,1} \approx \pi_1$.
Given the learned velocity field $\vg$, a sample $Y$ from the model distribution $q_{\psi,1}$ can be generated from a sample $Z\sim\pi_0$ by following the ODE dynamics in~\cref{eq:ODE}. 
With an $M$-step discretization of the time interval $[0,1]$, this trajectory takes the form
\begin{equation}\label{eq:trajectory}
Z=X_0\rightarrow X_{t_1}\rightarrow X_{t_2} \rightarrow \cdots \rightarrow  X_{t_{M-1}} \rightarrow  X_1=Y,
\end{equation}
where the symbol `$X_{s}\rightarrow X_{t}$' stands for a single Euler step:
$$X_{t}=X_{s}+\vg(X_{s},s)\cdot(t-s).$$

\subsection{Distillation by Piecewise Reflow}
\label{sec:perflow}
Given a pretrained CNF model with velocity field $\vg$, a more efficient CNF model with velocity field $\vf$ parameterized by $\theta$ can be obtained by distillation.
Hereinafter, we refer to $\vg$ and $\vf$ as the teacher and student velocity fields, respectively.
In distillation, the student is trained to define alternative ODE dynamics that approximate the teacher transport with simpler trajectories and fewer sampling steps.
A popular distillation strategy is Piecewise Reflow (PeRFlow)~\citep{yan2024perflow}, which partitions the time interval $[0,1]$ into $K$ windows ($K\ll M$) with cutting points $0 = t_0 < t_1 < \cdots < t_K = 1$, and trains $\vf$ by the following total \emph{trajectory matching loss}:
\begin{equation}\label{eq:TrajectoryMatchingLoss}
    \cL_{\mathrm{TM}}(\theta)
    =
    \sum_{k=1}^K \cL_{\mathrm{TM}}^{(k)}(\theta),
\end{equation}
where the $k$-th window-wise component is defined as
\begin{equation}
    \cL_{\mathrm{TM}}^{(k)}(\theta)
    =
    \E_{X \sim \pi_{t_{k-1}}}
    \int_{t_{k-1}}^{t_k}
    \left\|
    \vf\left(
    X + \frac{\tau - t_{k-1}}{t_k - t_{k-1}}(Y - X),\, \tau
    \right)
    -
    \frac{Y - X}{t_k - t_{k-1}}
    \right\|^2
    \dd \tau,
    \label{eq:perflow}
\end{equation}
where $\pi_t$ is the marginal of $X_t = a(t) X_0 + b(t) X_1$ with independent $X_0 \sim \pi_0$, $X_1 \sim \pi_1$ and a noise schedule $\{(a(t), b(t))\}_{t\in[0,1]}$ satisfying $a(0)=1, b(0)=0, a(1)=0, b(1)=1$, and $Y:= \flow{\vg}{t_{k-1}}{t_k}(X)$ is the endpoint of the $\vg$-determined trajectory starting from $X$ in the time window $W_k=[t_{k-1}, t_k]$.
The trajectory matching loss $\cL_{\mathrm{TM}}^{(k)}(\theta)$ encourages $\vf$ to match the constant velocity along the straight line connecting $X$ and $Y$ within each time window $W_k=[t_{k-1},t_k]$. 
This piecewise-linear approximation allows PeRFlow to reduce the number of sampling steps from $M$ to $K$ while largely preserving sample quality. 
When $K=1$, PeRFlow recovers the vanilla reflow procedure of Rectified Flow~\citep{liu2022flow}.

\subsection{Other Distillation Methods}
\label{sec:related_works}

\paragraph{Other reflow-family methods.}
Reflow-based distillation originates from Rectified Flow~\citep{liu2022flow}, which first proposed the trajectory-matching objective for learning straighter ODE dynamics. InstaFlow~\citep{liu2023instaflow} scaled this idea to text-to-image generation, and PeRFlow~\citep{yan2024perflow} extended it to the piecewise setting as reviewed in \cref{sec:perflow}. Subsequent work refined rectified-flow training~\citep{lee2024improving}, and Rectified Diffusion~\citep{wang2025rectified} showed that reflow is compatible with arbitrary noise schedules and prediction parameterizations. 
A major limitation of these methods lies in the fact that they rely on trajectory supervision alone and may under-determine the induced marginals (see \cref{thm:underdetermination} for details).

\paragraph{Consistency distillation.}
Consistency Models~\citep{song2023consistency} learn to map any point along an ODE trajectory directly to the endpoint at $t{=}1$, enabling one-step or few-step sampling. Following works have extended this idea to latent diffusion models~\citep{luo2023latent}, piecewise consistency constraints~\citep{wang2024pcm}, flexible start-to-end mappings~\citep{kim2024ctm,frans2025shortcut}, and improved training stability at scale~\citep{lu2025scm,ren2024hypersd}. These methods differ from ours in that the student maps directly between time points rather than parameterizing a continuous-time velocity field.

\paragraph{Adversarial distillation.}
Adversarial distillation trains a discriminator to provide distribution-level supervision for few-step generators. ADD~\citep{sauer2023adversarial} introduced this paradigm in pixel-space, and LADD~\citep{sauer2024ladd} moved the adversarial signal to latent space. Subsequent work combined adversarial losses with progressive distillation~\citep{lin2024sdxllightning} and integrated them into unified acceleration pipelines~\citep{chadebec2025flash}. While these methods introduce an explicit distributional signal, the additional discriminator network increases training complexity and can introduce instability.

\paragraph{Distribution matching distillation.}
Another line of work aligns generated distributions by comparing their noised score functions against those of a pretrained diffusion model. Since the noised score of the generated distribution is generally unavailable, SDS~\citep{poole2023dreamfusion} sidesteps this by using only the teacher score, while VSD~\citep{wang2024prolificdreamer}, Diff-Instruct~\citep{luo2024diff}, DMD~\citep{yin2024dmd}, and DMD2~\citep{yin2024dmd2} train auxiliary networks to approximate the student score. However, such learned proxies can lag behind the evolving student distribution, creating a moving-target problem.

In this paper, we augment reflow-based distillation with an explicit marginal-alignment objective, combining trajectory and marginal supervision while retaining a continuous-velocity student without requiring extra networks.
\section{Method}
\label{sec:method}

\subsection{Limitation of Trajectory Matching}
\label{sec:motivation}

Guided by the trajectory matching loss defined in \cref{eq:perflow}, 
PeRFlow imposes pointwise supervision on the student velocity field along interpolated teacher trajectories, thereby constraining the geometry of sample paths.
However, as highlighted by \cref{eq:instantaneous_change_of_variables}, the evolution of marginal densities along the flow depends on $\divop_x \vf$, a first-order spatial derivative of the velocity field, which is not constrained by the $L^2$ velocity error.
Consequently, two velocity fields with the same trajectory matching loss may lead to very different endpoint marginals as summarized by \cref{thm:underdetermination}.

\begin{theorem}[Trajectory matching cannot guarantee marginal alignment]
\label{thm:underdetermination}
There exist a teacher velocity field $\vg$ and two student velocity fields $\vec{f}_1, \vec{f}_2$ such that
\begin{equation}
    \cL_{\mathrm{TM}}^{(k)}(\vec{f}_1)=\cL_{\mathrm{TM}}^{(k)}(\vec{f}_2) \quad \text{for all } k=1,\ldots,K,
    \quad\text{but}\quad
    (\flow{\vec{f}_1}{0}{1})_{\#}\pi_0
    \neq
    (\flow{\vec{f}_2}{0}{1})_{\#}\pi_0.
\end{equation}
\end{theorem}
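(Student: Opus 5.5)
We prove the statement by an explicit low-dimensional construction, taking $K=1$ (vanilla reflow); the same construction can be repeated window by window for general $K$. The idea is to make the teacher trivial, so the trajectory-matching target is known in closed form. Then we perturb it by two velocity fields that have equal $L^2$ size along the interpolation paths but different divergence structure.

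Concretely, take $d=1$, $\pi_0=\N(0,1)$, and the teacher field $\vg\equiv 0$. Then $Y=X$ and the target velocity $(Y-X)/(t_1-t_0)$ vanishes. The interpolation point is $X$ itself, and $X\sim\pi_{t_0}=\pi_0$ on the first window. Hence
\[
\cL_{\mathrm{TM}}(\vec f)=\int_0^1\E_{X\sim\pi_0}\|\vec f(X,\tau)\|^2\,\dd\tau .
\]
Choose two time-independent students:
\begin{itemize}
\item $\vec f_1(x,\tau)=c$, a constant shift, which transports $\pi_0$ to $\N(c,1)$;
\item $\vec f_2(x,\tau)=a x$, a linear dilation, which transports $\pi_0$ to $\N(0,e^{2a})$.
\end{itemize}
The losses are $c^2$ and $a^2\E[X^2]=a^2$. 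Taking $c=a\neq 0$ makes them equal, while the endpoint marginals $\N(a,1)$ and $\N(0,e^{2a})$ differ, since their means differ. This illustrates the point of the preceding discussion: the loss only sees $\|\vec f\|$, whereas $\divop_x\vec f$ is $0$ in one case and $a$ in the other.

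For general $K$ with a fixed partition, we use the same teacher $\vg\equiv 0$. In every window the target is zero and $Y=X$. Note that $\pi_{t_{k-1}}$ is then whatever schedule marginal the definition specifies; the teacher need not reproduce it, since the theorem only asks for existence of some $\vg$.

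With a general window, the loss becomes $\int_{W_k}\E_{X\sim\pi_{t_{k-1}}}\|\vec f(X,\tau)\|^2\,\dd\tau$. To keep the equality exact for every $k$, I would let the students depend on time. On each window $W_k$ set $\vec f_1=c_k$ and $\vec f_2=a_k x$, with $c_k^2=a_k^2\,\E_{\pi_{t_{k-1}}}[X^2]$. The second moment is finite and positive for $k\ge 2$ when $\pi_1$ has finite second moment, which I assume.

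It then remains to check that the composed flows differ. The flow of $\vec f_1$ is a pure translation by $\sum_k c_k(t_k-t_{k-1})$. The flow of $\vec f_2$ is a pure scaling by $\exp\bigl(\sum_k a_k(t_k-t_{k-1})\bigr)$. Choosing all $a_k>0$ gives $c_k>0$, so the pushforward of $\pi_0$ under $\vec f_1$ has nonzero mean while the pushforward under $\vec f_2$ has mean zero. Hence the marginals differ. Discontinuity of the students at the cut points is harmless, since each window's ODE is solved separately.

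The main obstacle is a definitional subtlety: whether $\pi_{t_{k-1}}$ must be consistent with the teacher. If the intended setting requires it to be the teacher marginal, then with $\vg\equiv 0$ every $\pi_{t_{k-1}}$ equals $\pi_0$ and the computation simplifies further to $c_k=a_k$. Either way, each window's loss is a weighted second moment that the constant $c_k$ can match exactly, so the construction goes through.
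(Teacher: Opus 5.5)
Your construction is correct and is essentially the paper's. The paper likewise pits a constant shift $\mu+\varepsilon e$ against a linear dilation $\mu+\varepsilon\langle x,e\rangle e$, whose $L^2$ errors agree because $\langle X,e\rangle$ has unit second moment under every $\pi_{t_{k-1}}$. It differs only in using a nonzero constant teacher $\mu$ (hence $d\ge 2$ and $e\perp\mu$) with teacher-consistent marginals $\pi_t=\N(\mu t,I_d)$, so one time-independent pair works on all windows; your zero teacher in $d=1$ with window-wise coefficients $c_k,a_k$ instead accommodates arbitrary schedule marginals, at the price of students that are only piecewise constant in time.
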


\cref{app:proof_thm5} provides the detailed proof via an explicit construction. 
For a constant teacher velocity field $\vg(x,t)\equiv\mu$, choose a unit vector $e$ with $\langle e,\mu\rangle=0$ and define two student velocity fields
\begin{equation}
    \vec{f}_1(x,t)=\mu+\varepsilon e
    \qquad \text{and} \qquad
    \vec{f}_2(x,t)=\mu+\varepsilon\langle x,e\rangle e,
\end{equation}
where $\langle \cdot,\cdot\rangle$ denotes the inner product and $\varepsilon>0$ is an arbitrary perturbation magnitude.
It can be shown that $\vec{f}_1$ and $\vec{f}_2$ have the same trajectory matching loss on every window.
However, $\vec{f}_1$ shifts the endpoint mean, whereas $\vec{f}_2$ dilates the endpoint covariance along $e$, leading to different KL divergences from their endpoint marginals to the teacher marginal at $t=1$.
This fact shows that the trajectory matching loss alone is insufficient to determine the model distribution at $t=1$ induced by the student velocity field.

\subsection{Reflow with Additional Marginal-Alignment Regularizer}
\label{sec:regularizer}
A natural idea to improve the defective trajectory matching loss is to enhance it by an extra regularizer that aligns the endpoint distribution induced by the student flow with the corresponding teacher marginal. 
To be concrete, let $\qpsi[,t_k] := (\flow{\vg}{0}{t_k})_{\#} \pi_0$ be the inference-time marginal at time point $t_k$ induced by the teacher velocity field $\vg$.
For each time window $W_k=[t_{k-1},t_k]$, define the \emph{training-protocol marginal} of the student velocity field $\vf$ at time point $t_k$ as
\begin{equation}
    \ptilde[^{(k)}] := (\flow{\vf}{t_{k-1}}{t_k})_{\#} \pi_{t_{k-1}},
    \label{eq:training_protocol_marginal}
\end{equation}
that is, the distribution obtained by pushing the reference noising marginal $\pi_{t_{k-1}}$ through the student flow over the $k$-th window. 
We then define the \emph{marginal-alignment loss} as 
\begin{equation}
    \Lma(\theta) := \sum_{k=1}^{K} \Lma^{(k)}(\theta)
    \qquad \text{where} \qquad \Lma^{(k)}(\theta) := \KL\!\left(\ptilde[^{(k)}] \,\big\|\, \qpsi[,t_k]\right)
    \label{eq:kl_def}
\end{equation}
measures the discrepancy between the student training-protocol marginal $\ptilde[^{(k)}]$ and the teacher endpoint marginal $\qpsi[,t_k]$. 
Enhancing the trajectory matching loss $ \cL_{\mathrm{TM}}$ with the marginal alignment loss $\Lma$, we get the following aggregated total loss with $\lambda\geq 0$ as hyper-parameter: 
\begin{equation}
    \cL_{\mathrm{total}}(\theta) :=  \cL_{\mathrm{TM}}(\theta) + \lambda \Lma(\theta).
    \label{eq:overall_objective}
\end{equation}
Through the diffusion--flow correspondence reviewed in \cref{app:diffusion_flow}, this marginal-alignment objective can also be applied to diffusion-model distillation under different noise schedules and prediction parameterizations.

Let $\ptheta[,t_k] := (\flow{\vf}{0}{t_k})_{\#} \pi_0$ 
be the inference-time marginal at time point $t_k$ induced by the student velocity field $\vf$.
The following theorem shows that minimizing $\Lma$ controls the model distributional discrepancy between student and teacher.
\begin{theorem}[$\Lma$ bounds the distributional gap]
\label{thm:tvbound}
Let $\delta_k= \TV(\qpsi[,t_k], \pi_{t_k})$ measure the gap between the teacher-induced marginal $\qpsi[,t_k]$ and the reference noising marginal $\pi_{t_k}$ in terms of the total variation distance, where $TV(\mu,\nu):=\sup_{A\in\cF}|\mu(A)-\nu(A)|$ with $\cF$ being the common sigma field of the two probability measures $\mu$ and $\nu$. 
Then under \cref{asmp:regularity,asmp:finite},
\begin{equation}
    \TV\!\left(\ptheta[,t_K], \qpsi[,t_K]\right) \leq \sum_{k=1}^{K} \sqrt{\tfrac{1}{2}\,\Lma^{(k)}(\theta)} + \sum_{k=1}^{K-1} \delta_k.
    \label{eq:tvbound}
\end{equation}
\end{theorem}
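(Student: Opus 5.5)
The plan is a telescoping argument over the windows. The two tools are the data-processing inequality for total variation under a deterministic measurable map, and Pinsker's inequality. The first gives $\TV(T_{\#}\mu, T_{\#}\nu)\le \TV(\mu,\nu)$ for any measurable $T$; the second gives $\TV(\mu,\nu)\le\sqrt{\tfrac12\KL(\mu\|\nu)}$. I take \cref{asmp:regularity,asmp:finite} to guarantee three things: the student flow maps $\flow{\vf}{t_{k-1}}{t_k}$ are well-defined measurable maps, the pushforwards are probability measures on a common sigma field, and each $\Lma^{(k)}(\theta)$ is finite, so that Pinsker applies.

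First I set up the recursion. The flow has the semigroup property $\flow{\vf}{0}{t_k}=\flow{\vf}{t_{k-1}}{t_k}\circ\flow{\vf}{0}{t_{k-1}}$, which gives $\ptheta[,t_k]=(\flow{\vf}{t_{k-1}}{t_k})_{\#}\ptheta[,t_{k-1}]$. By definition, $\ptilde[^{(k)}]=(\flow{\vf}{t_{k-1}}{t_k})_{\#}\pi_{t_{k-1}}$. Let $e_k:=\TV(\ptheta[,t_k],\qpsi[,t_k])$. Since $\ptheta[,t_0]=\qpsi[,t_0]=\pi_0$, we have $e_0=0$.

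For $k\ge1$, I insert $\ptilde[^{(k)}]$ via the triangle inequality:
\begin{equation*}
e_k\le \TV\bigl(\ptheta[,t_k],\ptilde[^{(k)}]\bigr)+\TV\bigl(\ptilde[^{(k)}],\qpsi[,t_k]\bigr).
\end{equation*}
Pinsker bounds the second term by $\sqrt{\tfrac12\Lma^{(k)}(\theta)}$. The first term compares two pushforwards under the same map, so data processing bounds it by $\TV(\ptheta[,t_{k-1}],\pi_{t_{k-1}})$. A second triangle inequality through $\qpsi[,t_{k-1}]$ then bounds that by $e_{k-1}+\delta_{k-1}$. Together,
\begin{equation*}
e_k\le e_{k-1}+\delta_{k-1}+\sqrt{\tfrac12\Lma^{(k)}(\theta)}.
\end{equation*}

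Next I unroll the recursion from $k=1$ to $K$. This requires $\delta_0=\TV(\qpsi[,t_0],\pi_{t_0})=0$. That holds because $\qpsi[,0]=\pi_0$ and $\pi_{t_0}$ is the law of $a(0)X_0+b(0)X_1=X_0$, i.e.\ $\pi_0$. The sum of the $\delta$'s therefore runs only over $k=1,\dots,K-1$, and the result is exactly \eqref{eq:tvbound}.

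The main obstacle is measure-theoretic rather than conceptual. Data processing needs the flow map to be measurable and globally defined, with no blow-up on $[t_{k-1},t_k]$. Pinsker needs $\ptilde[^{(k)}]\ll\qpsi[,t_k]$; otherwise the KL is $+\infty$ and the bound is trivially true. I expect \cref{asmp:regularity} (Lipschitz, linear-growth velocity fields giving unique global flows that are homeomorphisms) to supply the first, and \cref{asmp:finite} (finite KL) to cover the second. If those assumptions are phrased differently, the fallback is to note that the inequality is trivial whenever any $\Lma^{(k)}$ is infinite.
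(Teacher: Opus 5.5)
Your proposal is correct and matches the paper's proof essentially step for step: the same recursion $e_k \le e_{k-1}+\delta_{k-1}+\sqrt{\tfrac12\Lma^{(k)}(\theta)}$, obtained by inserting $\ptilde[^{(k)}]$, bounding the propagated term by data processing plus a triangle inequality through $\qpsi[,t_{k-1}]$, bounding the alignment term by Pinsker, and telescoping from $e_0=\delta_0=0$. You also make two points the paper leaves implicit: you justify $\delta_0=0$ explicitly via $a(0)=1$, $b(0)=0$, and you note that an infinite KL term makes the bound trivially true.
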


The proof, given in \cref{app:proof_thm4}, recursively tracks $\TV(\ptheta[,t_k],\qpsi[,t_k])$. At each window, the triangle inequality separates the propagated error from the current alignment error; data processing bounds the former, and Pinsker's inequality converts the latter from KL to TV. Telescoping then yields \cref{eq:tvbound}. The $\delta_k$ terms arise because training starts each window from the reference marginal $\pi_{t_{k-1}}$, whereas inference starts from the previous student marginal. Thus, reducing the proposed marginal-alignment loss tightens an upper bound on the final student--teacher distributional discrepancy.

\paragraph{Why the training-protocol marginal.}
One could instead try to minimize $\KL(\ptheta[,t_k] \| \qpsi[,t_k])$ directly. Here $\ptheta[,t_k] := (\flow{\vf}{0}{t_k})_{\#}\pi_0$ denotes the inference-time marginal induced by composing the student flows over the first $k$ windows. However, computing this objective and its gradient would require unrolling the sampler from $t_0$ to $t_k$, so each window update would depend on the full prefix of the sampling chain. The training-protocol marginal $\ptilde[^{(k)}]$ reduces this cost by localizing the objective to the $k$-th window. It uses the reference noising marginal $\pi_{t_{k-1}}$, which is independent of $\theta$, as the input distribution instead of the recursively generated marginal $\ptheta[,t_{k-1}]$. This choice remains tied to inference-time behavior through the telescoping bound in \cref{thm:tvbound}.

\paragraph{Role of trajectory matching.}
Marginal alignment is introduced as a complement to trajectory matching, not as a replacement for it. The PeRFlow objective provides trajectory-level supervision by matching the student velocity to the teacher-induced transport within each window, thereby distilling teacher ODE trajectories into local transports that are easier to execute with few-step sampling. The marginal-alignment loss then favors velocity fields that induce better density evolution among candidates with comparable trajectory error. This complementarity is nontrivial precisely when $\cL_{\mathrm{TM}} > 0$, which is the realistic regime under finite model capacity.

\subsection{Computation of the Marginal-Alignment Loss}
The following two identities, proved in \cref{app:proof_prop1,app:proof_prop2}, give a computable representation of $\Lma^{(k)}$ and its gradient.

\begin{lemma}[Liouville representation]
\label{prop:liouville}
Under the regularity conditions specified in \cref{app:assumptions},
\begin{equation}
    \Lma^{(k)}(\theta)
    =
    \E_{X \sim \pi_{t_{k-1}}}\!\left[
    \log \pi_{t_{k-1}}(X)
    + \DeltaStu_{t_{k-1}\to t_k}(X;\theta)
    - \log \qpsi[,t_k]\!\left(\Xstu_{t_{k-1}\to t_k}(X;\theta)\right)
    \right],
    \label{eq:liouville}
\end{equation}
where $\Xstu_{t_{k-1}\to t}(X;\theta) := \flow{\vf}{t_{k-1}}{t}(X)$ denotes the student trajectory at time $t$ initialized from $X$ at time $t_{k-1}$, and
$\DeltaStu_{t_{k-1}\to t_k}(X;\theta) := -\int_{t_{k-1}}^{t_k} \divop_x \vf\!\left(\Xstu_{t_{k-1}\to \tau}(X;\theta),\,\tau\right)\dd\tau$
is the corresponding log-density change.
\end{lemma}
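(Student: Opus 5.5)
The plan is to write the KL divergence as an expectation under $\ptilde[^{(k)}]$, rewrite that expectation as one under $\pi_{t_{k-1}}$ by a change of variables through the student flow map, and then evaluate the pushforward density along trajectories using the instantaneous change-of-variables formula \eqref{eq:instantaneous_change_of_variables}. Concretely, I would start from
\[
\Lma^{(k)}(\theta)=\E_{Y\sim \ptilde[^{(k)}]}\bigl[\log \ptilde[^{(k)}](Y)-\log \qpsi[,t_k](Y)\bigr].
\]
This requires that both measures have densities and that $\ptilde[^{(k)}]\ll \qpsi[,t_k]$. These are exactly the regularity conditions of \cref{app:assumptions}, which I take to include a locally Lipschitz, $C^1$ field $\vf$ with well-posed flow, densities for $\pi_{t_{k-1}}$ and $\qpsi[,t_k]$, and the integrability needed for each term. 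Since $\ptilde[^{(k)}]$ is by definition \eqref{eq:training_protocol_marginal} the law of $\Xstu_{t_{k-1}\to t_k}(X;\theta)$ with $X\sim\pi_{t_{k-1}}$, the expectation becomes an expectation over $X\sim\pi_{t_{k-1}}$ with $Y=\Xstu_{t_{k-1}\to t_k}(X;\theta)$.

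The key step is the identity
\[
\log \ptilde[^{(k)}]\bigl(\Xstu_{t_{k-1}\to t_k}(X;\theta)\bigr)=\log\pi_{t_{k-1}}(X)+\DeltaStu_{t_{k-1}\to t_k}(X;\theta).
\]
To prove it, define the intermediate marginals $\rho_\tau:=(\flow{\vf}{t_{k-1}}{\tau})_\#\pi_{t_{k-1}}$ for $\tau\in W_k$, so that $\rho_{t_{k-1}}=\pi_{t_{k-1}}$ and $\rho_{t_k}=\ptilde[^{(k)}]$. These marginals solve the Liouville equation with velocity $\vf$. Applying \eqref{eq:instantaneous_change_of_variables}, with $\vf$ in place of $\vg$ and initial time $t_{k-1}$, gives $\frac{\dd}{\dd\tau}\log\rho_\tau(\Xstu_{t_{k-1}\to\tau})=-\divop_x\vf(\Xstu_{t_{k-1}\to\tau},\tau)$. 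Integrating over $W_k$ yields the identity.

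As an independent check, I would also derive the identity from the static change-of-variables formula $\rho_{t_k}(\Phi(x))=\pi_{t_{k-1}}(x)/|\det\nabla\Phi(x)|$ with $\Phi=\flow{\vf}{t_{k-1}}{t_k}$, combined with Liouville's formula (Jacobi's identity) $\frac{\dd}{\dd\tau}\log\det\nabla_x\flow{\vf}{t_{k-1}}{\tau}(x)=\divop_x\vf(\flow{\vf}{t_{k-1}}{\tau}(x),\tau)$. The determinant stays positive because the flow map is a diffeomorphism homotopic to the identity. This route avoids needing time-regularity of $\rho_\tau$, so I would use it if the assumptions are stated only on $\vf$.

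Substituting the identity into the KL expression gives \eqref{eq:liouville} directly. The main obstacle is not the algebra but justifying each step rigorously. This means confirming that the flow map is a $C^1$ diffeomorphism on $\R^d$ (global existence, for instance under linear growth of $\vf$), that the densities are positive wherever needed so the logarithms are finite, and that the three terms are separately integrable so the expectation can be split. I would handle this by invoking the assumptions of \cref{app:assumptions} verbatim at each of these points, rather than deriving them.
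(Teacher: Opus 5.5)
Your proposal is correct and follows essentially the same route as the paper. The paper applies the static change-of-variables formula for the diffeomorphism $\flow{\vf}{t_{k-1}}{t_k}$ and converts $\log|\det\nabla_x\flow{\vf}{t_{k-1}}{t_k}|$ into the integrated divergence via the instantaneous change-of-variables formula, which yields $\log \ptilde[^{(k)}](\Xstu_{t_{k-1}\to t_k}) = \log\pi_{t_{k-1}}(X) + \DeltaStu_{t_{k-1}\to t_k}$; it then substitutes this into the KL and pulls the expectation back to $X\sim\pi_{t_{k-1}}$. Your Jacobi-formula variant is exactly that argument, stated more explicitly.
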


\begin{theorem}[Gradient formula]
\label{prop:gradient}
For brevity, write
$\Xstu_k := \Xstu_{t_{k-1}\to t_k}(X;\theta)$ and
$\DeltaStu_k := \DeltaStu_{t_{k-1}\to t_k}(X;\theta)$.
Under the regularity conditions specified in \cref{app:assumptions},
the gradient of $\Lma^{(k)}$ can be written as
\begin{equation}
    \nabla_\theta \Lma^{(k)}(\theta)
    =
    \E_{X \sim \pi_{t_{k-1}}}\!\left[
    \nabla_\theta \DeltaStu_k
    -
    \spsi\!\left(\Xstu_k,t_k\right)^\top
    \nabla_\theta \Xstu_k
    \right],
    \label{eq:gradient}
\end{equation}
where $\spsi(x,t_k) := \nabla_x \log \qpsi[,t_k](x)$ is the teacher score function. Equivalently, the gradient in \cref{eq:gradient} is obtained by optimizing the following stop-gradient form:
\begin{equation}
    \E_{X \sim \pi_{t_{k-1}}}\!\left[
    \DeltaStu_k
    -
    \stopgrad\!\left(\spsi\!\left(\Xstu_k,t_k\right)\right)^\top \Xstu_k
    \right],
    \label{eq:stopgrad}
\end{equation}
where $\stopgrad(\cdot)$ is the standard deep-learning operator that leaves its argument unchanged in the forward pass and blocks gradients in the backward pass.
\end{theorem}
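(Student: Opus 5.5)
The plan is to differentiate the Liouville representation of \cref{prop:liouville} under the expectation and show that only two of the resulting terms survive. Since $X\sim\pi_{t_{k-1}}$ has a law independent of $\theta$, the first term $\log\pi_{t_{k-1}}(X)$ contributes nothing to the gradient. Under the regularity conditions of \cref{app:assumptions} (smoothness of $\vf$ in $(x,\theta)$, integrable dominating bounds on the derivatives, and differentiability of $\log\qpsi[,t_k]$), dominated convergence lets us exchange $\nabla_\theta$ with $\E_{X\sim\pi_{t_{k-1}}}$. This gives
\begin{equation*}
\nabla_\theta\Lma^{(k)}(\theta)=\E_{X\sim\pi_{t_{k-1}}}\!\left[\nabla_\theta\DeltaStu_k-\nabla_\theta\bigl(\log\qpsi[,t_k](\Xstu_k)\bigr)\right].
\end{equation*}

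The key structural point is that the teacher marginal $\qpsi[,t_k]$ does not depend on $\theta$: it is the pushforward of $\pi_0$ under the frozen teacher flow. The chain rule therefore gives
\begin{equation*}
\nabla_\theta\log\qpsi[,t_k](\Xstu_k)=\spsi(\Xstu_k,t_k)^\top\nabla_\theta\Xstu_k,
\end{equation*}
where $\nabla_\theta\Xstu_k$ is the Jacobian of the flow map with respect to parameters. This Jacobian exists and is integrable because the ODE solution depends differentiably on parameters; one can cite the standard sensitivity equation $\tfrac{\dd}{\dd\tau}\nabla_\theta X_\tau=\nabla_x\vf\,\nabla_\theta X_\tau+\nabla_\theta\vf$ with zero initial condition. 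The same parameter-differentiability, applied to the integral defining $\DeltaStu_k$, justifies that $\nabla_\theta\DeltaStu_k$ exists. Substituting the chain-rule expression yields \cref{eq:gradient}.

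For the stop-gradient form, I would note that the gradient of \cref{eq:stopgrad}, computed with $\spsi(\Xstu_k,t_k)$ treated as a constant, is $\E[\nabla_\theta\DeltaStu_k-\spsi(\Xstu_k,t_k)^\top\nabla_\theta\Xstu_k]$. Its forward value differs from $\Lma^{(k)}$, but its gradient matches. The interchange of derivative and expectation is again justified by dominated convergence.

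I expect the main obstacle to be the rigorous justification of interchanging $\nabla_\theta$ with the expectation. This needs integrable envelopes on $\nabla_\theta\DeltaStu_k$ and on $\|\spsi\|\,\|\nabla_\theta\Xstu_k\|$. I would first obtain them from Grönwall bounds on the sensitivity equations, using the Lipschitz and moment assumptions of \cref{asmp:regularity,asmp:finite} together with a linear-growth bound on the teacher score. Everything else is chain rule.
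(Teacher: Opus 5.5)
Your proposal is correct and matches the paper's proof step for step. Like the paper, you differentiate the Liouville representation under the expectation, drop the $\theta$-independent $\log\pi_{t_{k-1}}(X)$ term, and apply the chain rule through the frozen teacher log-density to obtain $\spsi(\Xstu_k,t_k)^\top\nabla_\theta\Xstu_k$; you then justify the interchange by dominated convergence and read off the stop-gradient form from the $\theta$-independence of $\spsi$. Your sensitivity-equation/Gr\"onwall sketch for the dominating bounds is more explicit than the paper's one-line appeal to \cref{asmp:regularity}. Note, though, that it relies on ingredients the stated assumptions do not literally supply: $\theta$-smoothness of $\vf$, second derivatives of $\vf$ to differentiate $\DeltaStu_k$, and a growth bound on $\spsi$. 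The paper's proof glosses over the same gap.
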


In practice, we obtain $\Xstu_{t_{k-1}\to t_k}(X;\theta)$ and $\DeltaStu_{t_{k-1}\to t_k}(X;\theta)$ by integrating the augmented student velocity field
\begin{equation}
    \vfaug((x,c),t) :=
    \begin{pmatrix}
        \vf(x,t) \\
        -\divop_x \vf(x,t)
    \end{pmatrix}
    \label{eq:augmented_ode}
\end{equation}
from the initial augmented state $(X,0)$ over $[t_{k-1},t_k]$. 
The divergence is estimated by the Hutchinson identity
$\divop_x \vf(x,t)=\E_{\epsilon \sim \N(0,I_d)}[\epsilon^\top \nabla_x \vf(x,t)\epsilon]$. Following the finite-difference implementation in JKO-iFlow~\citep{xu2023normalizing}, we approximate the directional derivative by
\begin{equation}
    \nabla_x \vf(x,t)\epsilon
    \approx
    \frac{\vf(x+\sigma\epsilon,t)-\vf(x,t)}{\sigma},
    \label{eq:fd_hutchinson}
\end{equation}
and estimate the resulting trace with Monte Carlo samples during training. Additional estimator and solver details are given in \cref{app:kl_details}. The frozen teacher score $\spsi$ is derived from the teacher velocity field $\vg$ via the velocity--score correspondence (see \cref{app:diffusion_flow} for details).

\Cref{alg:training} summarizes the training procedure of the proposed MA-Reflow method. In our experiments, we instantiate the trajectory-matching component with PeRFlow.
Since the marginal-alignment loss requires integrating the augmented student velocity field, we evaluate it every $m$ iterations and add it to the PeRFlow loss to reduce the computational cost of KL evaluation.

\begin{algorithm}[t]
\caption{Reflow with Additional Marginal-Alignment Regularizer (MA-Reflow). Instantiated with PeRFlow.}
\label{alg:training}
\begin{algorithmic}[1]
\renewcommand{\algorithmicrequire}{\textbf{Input:}}
\REQUIRE Training dataset $\mathcal{D}$, noise schedule $(a(t), b(t))$, teacher velocity field $\vg$ with associated score $\spsi$, window partition $\{t_k\}_{k=0}^K$, loss weight $\lambda$, alternating ratio $m$
\STATE Initialize the student model $\vf$
\REPEAT
    \STATE Sample data $X_1 \sim \mathcal{D}$ and noise $X_0 \sim \N(0,I_d)$
    \STATE Sample window $k \sim \mathrm{Uniform}\{1,\ldots,K\}$ and time $t \sim \mathrm{Uniform}(t_{k-1}, t_k)$
    \STATE Get $X = a(t_{k-1})X_0 + b(t_{k-1})X_1$
    \STATE $Y \leftarrow \flow{\vg}{t_{k-1}}{t_k}(X)$ \hfill // teacher endpoint
    \STATE $z_t \leftarrow X + \frac{t-t_{k-1}}{t_k-t_{k-1}}(Y-X)$
    \STATE $\ell \leftarrow \left\|\vf(z_t,t) - \frac{Y-X}{t_k-t_{k-1}}\right\|^2$ \hfill // PeRFlow loss
    \IF{every $m$ iterations}
        \STATE $(\Xstu_{t_{k-1}\to t_k}, \DeltaStu_{t_{k-1}\to t_k}) \leftarrow \flow{\vfaug}{t_{k-1}}{t_k}(X,0)$ \hfill // augmented student flow
        \STATE $\ell_{\mathrm{MA}} \leftarrow \DeltaStu_{t_{k-1}\to t_k} - \stopgrad(\spsi(\Xstu_{t_{k-1}\to t_k}, t_k))^\top \Xstu_{t_{k-1}\to t_k}$
        \STATE $\ell \leftarrow \ell + \lambda \ell_{\mathrm{MA}}$ \hfill // total loss
    \ENDIF
    \STATE Update $\theta$ using $\nabla_\theta \ell$
\UNTIL{convergence}
\end{algorithmic}
\end{algorithm}

\subsection{Comparison with Existing Methods}
\begin{table}[t]
\caption{Positioning of MA-Reflow relative to representative prior work.}
\label{tab:positioning}
\centering
\small
\begin{tabular}{ccccc}
\toprule
Methods & \makecell{Trajectory\\loss} & \makecell{Marginal\\loss} & \makecell{No extra\\network} & \makecell{Continuous\\velocity student} \\
\midrule
Reflow-based distillation & \checkmark & & \checkmark & \checkmark \\
Consistency distillation & \checkmark & & \checkmark & \\
Adversarial distillation & & \checkmark & & \\
Distribution matching distillation & & \checkmark & & \\
\textbf{MA-Reflow (ours)} & \checkmark & \checkmark & \checkmark & \checkmark \\
\bottomrule
\end{tabular}
\end{table}

\cref{tab:positioning} compares MA-Reflow with the distillation families reviewed in \cref{sec:related_works} along four axes: whether the method uses trajectory-level supervision, whether it introduces an explicit marginal-level objective, whether it avoids any auxiliary trainable network (e.g., critic, discriminator, or student-score estimator), and whether the student remains a continuous-time velocity field compatible with variable-step ODE solvers. MA-Reflow is the only method that combines trajectory and marginal supervision while retaining a continuous-velocity student and requiring no extra networks.
\section{Experiments}
\label{sec:experiments}

\subsection{Experimental Setup}
\label{sec:setup}

We evaluate the proposed marginal-alignment regularizer on two text-to-image backbones, Stable Diffusion v1.5 (SD~1.5)~\citep{rombach2022ldm} and Stable Diffusion XL (SDXL)~\citep{podell2023sdxl}. For both backbones, the student is initialized from a converged PeRFlow checkpoint~\footnote{We use the PeRFlow checkpoints released on Hugging Face: \href{https://huggingface.co/hansyan/perflow-sd15-delta-weights}{\texttt{hansyan/perflow-sd15-delta-weights}} for SD~1.5 and \href{https://huggingface.co/hansyan/perflow-sdxl-base}{\texttt{hansyan/perflow-sdxl-base}} for SDXL.} and further optimized with the joint objective in \cref{eq:overall_objective}. This setup reduces training cost and matches the role of our regularizer, which improves marginal alignment among students with comparable trajectory-matching quality. Additional experiments with a latent-space flow-matching teacher on CelebA-HQ, covering both teacher and PeRFlow initialization, are provided in \cref{app:lfm}.

\paragraph{Experimental configuration.}
For both SD~1.5 and SDXL experiments, training images are sampled from LAION-Aesthetics v2 5+~\citep{schuhmann2022laion}, following the PeRFlow training setup. Images are center-cropped and resized to $512{\times}512$ for SD~1.5 and $1024{\times}1024$ for SDXL. The distillation uses $K{=}4$ windows. Within each window, the $\epsilon$-prediction teacher produces the target endpoint using an 8-step DDIM sampler~\citep{song2021ddim}, which can be viewed as Euler integration in a rescaled coordinate system (\cref{app:vp_schedule}). The marginal-alignment regularizer is computed by solving the augmented ODE with a 4-step RK4 solver. To support classifier-free guidance (CFG), we apply 10\% text-conditioning dropout during training. The regularization weight $\lambda$ is set to $10^{-6}$. For efficiency, the marginal-alignment loss is evaluated once every nine PeRFlow updates, which accounts for roughly 30\% of the total training time in our implementation. All training runs are conducted on $2{\times}$NVIDIA H200 GPUs with BF16 mixed precision. Under this setup, SD~1.5 fine-tuning for 5K steps completes in approximately one day. For other hyperparameters, including learning rate and weight decay, we follow the Hugging Face training scripts for Stable Diffusion.\footnote{\href{https://github.com/huggingface/diffusers/tree/main/examples/text_to_image}{\texttt{diffusers/examples/text\_to\_image}}}
Following standard evaluation practice for text-to-image generation, we report Fr\'{e}chet Inception Distance (FID;~\citealp{heusel2017fid}) and CLIP score~\citep{radford2021learning}. Our evaluation uses the 2014 and 2017 validation sets of MS-COCO~\citep{lin2014coco}, following the BK-SDM~\citep{kim2023bksdm} setup for the 2014 validation set and the Rectified Diffusion~\citep{wang2025rectified} setup for the 2017 validation set.

\subsection{Main Results}
\label{sec:main_results}
\Cref{tab:sd15_coco,tab:sdxl} summarize the main quantitative results. Across both SD~1.5 and SDXL, the proposed method consistently improves few-step generation under the same 4-step inference budget. For SD~1.5, our method reduces COCO-2014 FID from 12.01 to 10.66 relative to PeRFlow and outperforms representative 4-step distillation baselines, including Flash Diffusion and PCM. Qualitative comparisons are provided in \cref{app:qualitative_sd15}. On COCO-2017, it also improves both evaluation metrics, reducing FID from 23.81 to 22.76 and increasing CLIP score from 30.24 to 31.03. The same trend carries over to SDXL, where our method improves over PeRFlow on both COCO-2017 and COCO-2014-10K. Among the compared 4-step methods, it achieves the best COCO-2014-10K FID and remains close to Rectified Diffusion on COCO-2017. These results suggest that the proposed regularizer improves the few-step generation quality of student models obtained through reflow-based distillation.

\begin{table}[t]
\centering
\small
\caption{SD~1.5 results on COCO-2014 and COCO-2017 validation sets.}
\label{tab:sd15_coco}
\setlength{\tabcolsep}{4.0pt}
\begin{tabular}{@{}c@{\hspace{0.08\textwidth}}c@{}}
\begin{minipage}[t]{0.38\textwidth}
\centering
\textbf{(a) COCO-2014 validation set.}\\[2pt]
\begin{tabular}{lcc}
\toprule
Method & Steps & FID ($\downarrow$) \\
\midrule
\multicolumn{3}{l}{\textit{Training-free ODE solvers}} \\
DPMSolver~\citep{lu2022dpm} & 25 & 9.78 \\
DPMSolver~\citep{lu2022dpm} & 8 & 22.44 \\
DPMSolver++~\citep{lu2025dpm} & 4 & 22.36 \\
DDIM~\citep{song2021ddim} (teacher) & 32 & 10.05 \\
\midrule
\multicolumn{3}{l}{\textit{Distilled few-step models}} \\
LCM-LoRA~\citep{luo2023lcm} & 4 & 23.62 \\
2-ReFlow~\citep{liu2022flow} & 4 & 15.32 \\
Flash Diffusion~\citep{chadebec2025flash} & 4 & 12.41 \\
PeRFlow~\citep{yan2024perflow} & 4 & 12.01 \\
PCM~\citep{wang2024pcm} & 4 & 11.70 \\
\textbf{MA-Reflow (ours, $K{=}4$)} & 4 & \textbf{10.66} \\
\bottomrule
\end{tabular}
\end{minipage}
&
\begin{minipage}[t]{0.47\textwidth}
\centering
\textbf{(b) COCO-2017 validation set.}\\[2pt]
\begin{tabular}{lccc}
\toprule
Method & Steps & FID ($\downarrow$) & CLIP ($\uparrow$) \\
\midrule
\multicolumn{4}{l}{\textit{Training-free ODE solvers}} \\
DPMSolver~\citep{lu2022dpm} & 25 & 20.10 & 31.80 \\
DDIM~\citep{song2021ddim} (teacher) & 32 & 21.98 & 31.22 \\
\midrule
\multicolumn{4}{l}{\textit{Distilled few-step models}} \\
LCM-LoRA~\citep{luo2023lcm} & 4 & 36.46 & 29.10 \\
2-ReFlow~\citep{liu2022flow} & 4 & 32.97 & 28.93 \\
LCM~\citep{luo2023latent} & 4 & 24.39 & 30.50 \\
PeRFlow~\citep{yan2024perflow} & 4 & 23.81 & 30.24 \\
\textbf{MA-Reflow (ours, $K{=}4$)} & 4 & \textbf{22.76} & \textbf{31.03} \\
\bottomrule
\end{tabular}
\end{minipage}%
\end{tabular}
\end{table}

\begin{table}[t]
\caption{SDXL results on COCO-2014-10K and COCO-2017 validation sets. Methods are sorted by FID on COCO-2014-10K.}
\label{tab:sdxl}
\centering
\small
\begin{tabular}{llcc}
\toprule
Method & Steps & \makecell{COCO-2014-10K\\FID ($\downarrow$)} & \makecell{COCO-2017\\FID ($\downarrow$)} \\
\midrule
SDXL-Lightning~\citep{lin2024sdxllightning} & 4 & 24.56 & 30.16 \\
SDXL-Turbo~\citep{sauer2023adversarial} & 4 & 23.19 & 30.92 \\
HyperSD~\citep{ren2024hypersd} & 4 & 22.95 & 30.58 \\
LCM~\citep{luo2023latent} & 4 & 22.16 & 27.09 \\
PCM~\citep{wang2024pcm} & 4 & 21.04 & --- \\
PeRFlow~\citep{yan2024perflow} & 4 & 20.99 & 27.06 \\
Rectified Diffusion~\citep{wang2025rectified} & 4 & 19.71 & \textbf{25.81} \\
DMD2~\citep{yin2024dmd2} & 4 & 19.32 & 26.19 \\
\textbf{MA-Reflow (ours, $K{=}4$)} & 4 & \textbf{19.17} & 25.94 \\
\bottomrule
\end{tabular}
\end{table}

\subsection{Sensitivity Analysis}
\label{sec:ablation}

We conduct sensitivity analyses on SD~1.5 to examine the behavior of the proposed method with respect to the CFG scale and the weight of the marginal-alignment regularizer. The results are summarized in \cref{fig:lambda}.

\paragraph{CFG sensitivity.}
CFG controls the balance between sample fidelity and text-image alignment during inference and can substantially affect the generation results of Stable Diffusion models. We sweep the guidance scale $w \in [2,7]$ on COCO-2017 under both 4-step and 8-step sampling. The resulting curves exhibit the standard trade-off between FID and CLIP score, with stable behavior across the tested range. This pattern indicates that the marginal-alignment regularizer is compatible with standard CFG-based inference rather than relying on a narrowly tuned guidance scale.

\paragraph{$\lambda$ sensitivity.}
The coefficient $\lambda$ controls the relative strength of marginal alignment in the joint objective. The default value $\lambda{=}10^{-6}$ is chosen according to the observed relative gradient magnitudes of the PeRFlow loss and the KL term. The right panel of \cref{fig:lambda} further examines nearby choices of $\lambda$ and shows improvements over the PeRFlow baseline for $\lambda \in [10^{-7}, 10^{-5}]$ under 4-step sampling, with the best result obtained at $\lambda{=}10^{-6}$. This sensitivity study supports the default scale used in the main experiments and suggests that the method does not require fine-grained tuning of $\lambda$.

\begin{figure}[t]
    \centering
    \includegraphics[width=1.0\textwidth]{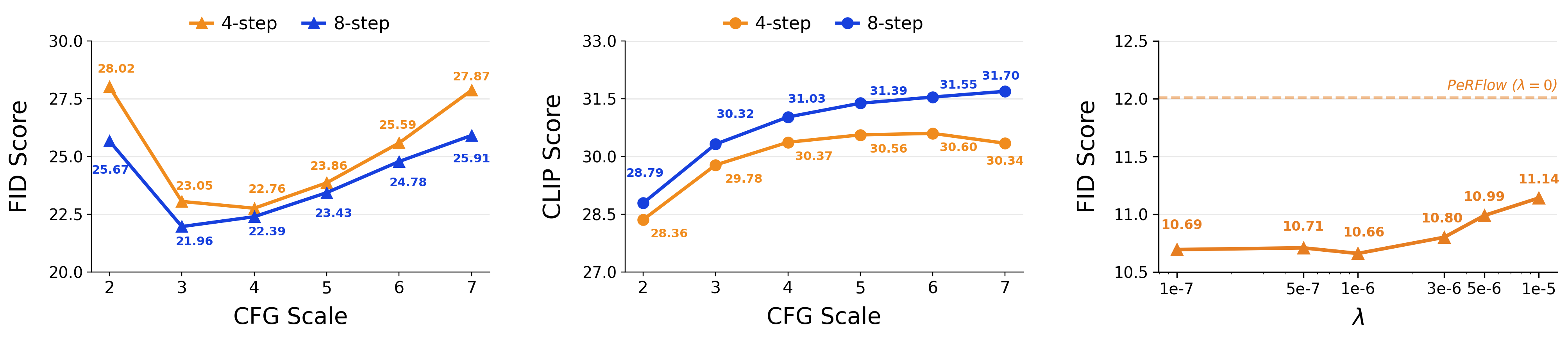}
    \caption{Sensitivity analysis on SD~1.5 (COCO-2017 validation).
    \textbf{Left column}: FID score under 4-step and 8-step sampling across CFG scales $w \in [2,7]$.
    \textbf{Middle column}: CLIP score under 4-step and 8-step sampling across CFG scales $w \in [2,7]$.
    \textbf{Right column}: FID score under 4-step across $\lambda$ values. The dashed line marks the PeRFlow baseline ($\lambda{=}0$).
    }
    \label{fig:lambda}
\end{figure}
\section{Conclusion}
\label{sec:conclusion}

In this work, we revisit reflow-based distillation from a distributional perspective. We show that trajectory matching, although effective for simplifying teacher ODE trajectories and improving the performance of few-step sampling, is insufficient to determine the distributions produced by the student when velocity matching is imperfect. This limitation motivates a complementary distribution-level signal. We therefore introduce a marginal-alignment regularizer that penalizes window-wise discrepancies from the corresponding teacher distributions without introducing auxiliary trainable networks. We further provide theoretical justification for this local-alignment strategy by connecting the window-wise penalties to the final-time distributional discrepancy. Empirically, the results across diffusion and flow-based settings indicate that marginal alignment is a useful complement to trajectory-based distillation for accelerating generative models.

\noindent\textbf{Limitations.}~~Although the marginal-alignment objective is compatible with other ODE-based reflow settings, including different noise schedules and the special case of $K{=}1$, empirical evaluation of these extensions is left to future work due to computational constraints. Extending the framework to non-ODE student models, such as consistency models or shortcut models, would require a different mechanism for assessing distribution-level discrepancies.

\bibliographystyle{plainnat}
\bibliography{references}

\newpage
\appendix
\numberwithin{table}{section}
\numberwithin{figure}{section}
\numberwithin{equation}{section}
\section{Proofs}
\label{app:proofs}

\subsection{Standing Assumptions}
\label{app:assumptions}

The following assumptions are used throughout the proofs. They ensure that the ODE systems are well posed and that the information-theoretic quantities in the analysis are finite.

\begin{assumption}[Regularity]
\label{asmp:regularity}
$\vf, \vg \in C^1(\R^d \times [0,1]; \R^d)$ are uniformly Lipschitz in $x$ with at most linear growth. Consequently, the ODE solutions exist globally, are unique, and induce $C^1$-diffeomorphic flow maps $\Phi$.
\end{assumption}

\begin{assumption}[Density regularity]
\label{asmp:density}
All distributions appearing in the analysis admit positive $C^1$ densities with sufficient decay at infinity, so that integration by parts introduces no boundary terms.
\end{assumption}

\begin{assumption}[Finiteness]
\label{asmp:finite}
All KL divergences and expectations involved in the analysis are finite.
\end{assumption}

\subsection{Proof of \texorpdfstring{\cref{thm:underdetermination}}{Theorem 1} (Trajectory matching cannot guarantee marginal alignment)}
\label{app:proof_thm5}

We prove \cref{thm:underdetermination} by constructing an explicit Gaussian example in which two student velocity fields attain the same trajectory-matching loss on every window, yet induce different endpoint marginals and different KL divergences to the teacher marginal.

\begin{proof}
Let $d \geq 2$, fix one window $[t_{k-1}, t_k]$, and let $\mu \in \R^d \setminus \{0\}$ be arbitrary. Define
\begin{equation}
    \pi_{t_{k-1}} = \N(\mu t_{k-1}, I_d), \qquad \pi_{t_k} = \N(\mu t_k, I_d),
\end{equation}
and let the teacher velocity field be $\vg(x,t) \equiv \mu$ on $[t_{k-1}, t_k]$. Then the teacher flow is $\flow{\vg}{t_{k-1}}{t_k}(x) = x + \mu(t_k - t_{k-1})$, the teacher-induced marginal is $\qpsi[,t_k] = \N(\mu t_k, I_d)$, and the PeRFlow target velocity is $\vec{v}_\psi^{\mathrm{avg}}(x) = \mu$.

Choose a unit vector $e \in \R^d$ such that $\langle e, \mu \rangle = 0$. For any $\varepsilon > 0$, define two student velocity fields on $[t_{k-1}, t_k]$ by
\begin{equation}
    \vec{f}_1(x,t) := \mu + \varepsilon e, \qquad \vec{f}_2(x,t) := \mu + \varepsilon \langle x, e \rangle e.
\end{equation}
Both fields satisfy \cref{asmp:regularity}.

\textbf{Step 1: Equal trajectory-matching losses.} Let $X \sim \pi_{t_{k-1}} = \N(\mu t_{k-1}, I_d)$. The teacher endpoint is $Y = X + \mu(t_k - t_{k-1})$, and the linear interpolation between $X$ and $Y$ is $X + (t - t_{k-1})\mu$. Since $\langle e, \mu \rangle = 0$, the inner product between this interpolation and $e$ is $\langle X, e \rangle$ for all $t$. The velocity errors relative to the target $\mu$ are
\begin{equation}
    \vec{f}_1(x,t) - \mu = \varepsilon e, \qquad \vec{f}_2(x,t) - \mu = \varepsilon \langle x, e \rangle e.
\end{equation}
Along the interpolation, $\|\vec{f}_1 - \mu\|^2 = \varepsilon^2$ and $\|\vec{f}_2 - \mu\|^2 = \varepsilon^2 \langle X, e \rangle^2$. Because $\langle X, e \rangle \sim \N(0, 1)$ by $\langle e, \mu \rangle = 0$, we have $\E[\langle X, e \rangle^2] = 1$. Therefore,
\begin{equation}
    \cL_{\mathrm{TM}}^{(k)}(\vec{f}_1) = \cL_{\mathrm{TM}}^{(k)}(\vec{f}_2) = \varepsilon^2 (t_k - t_{k-1}).
\end{equation}

\textbf{Step 2: Different global flow maps.} For $\vec{f}_1$, the dynamics are constant, $\dot{x} = \mu + \varepsilon e$. Thus each window adds the displacement $(\mu + \varepsilon e)(t_k - t_{k-1})$, and
\begin{equation}
    \flow{\vec{f}_1}{0}{1}(x) = x + \mu + \varepsilon e.
\end{equation}

For $\vec{f}_2$, the dynamics are linear, $\dot{x} = \mu + \varepsilon \langle x, e \rangle e$. Decompose $x = x_\parallel + x_\perp$, where $x_\parallel = \langle x, e \rangle e$. The coefficient $\eta(t) := \langle x(t), e \rangle$ satisfies $\dot{\eta}(t) = \varepsilon \eta(t)$, which gives $\eta(1) = e^{\varepsilon} \eta(0)$. Meanwhile, $x_\perp(1) = x_\perp(0) + \mu$. Combining these two components yields
\begin{equation}
    \flow{\vec{f}_2}{0}{1}(x) = x + \mu + (e^{\varepsilon} - 1)\langle x, e \rangle e.
\end{equation}

\textbf{Step 3: Different endpoint marginals at $t=1$.} Applying these maps to $X \sim \pi_0 = \N(0, I_d)$ gives
\begin{align}
    (\flow{\vec{f}_1}{0}{1})_{\#}\pi_0 &= \N(\mu + \varepsilon e,\; I_d), \\
    (\flow{\vec{f}_2}{0}{1})_{\#}\pi_0 &= \N(\mu,\; I_d + (e^{2\varepsilon} - 1)ee^\top).
\end{align}
Thus $\vec{f}_1$ shifts the endpoint mean while preserving the covariance, whereas $\vec{f}_2$ preserves the endpoint mean while inflating the covariance along $e$.

\textbf{Step 4: Different KL values.} With respect to $\qpsi[,1] = \N(\mu, I_d)$,
\begin{align}
    \KL((\flow{\vec{f}_1}{0}{1})_{\#}\pi_0 \,\|\, \qpsi[,1]) &= \frac{\varepsilon^2}{2}, \\
    \KL((\flow{\vec{f}_2}{0}{1})_{\#}\pi_0 \,\|\, \qpsi[,1]) &= \frac{1}{2}(e^{2\varepsilon} - 1 - 2\varepsilon) = \frac{1}{2}\sum_{n=2}^\infty \frac{(2\varepsilon)^n}{n!} > \frac{\varepsilon^2}{2}.
\end{align}
Therefore $\cL_{\mathrm{TM}}^{(k)}(\vec{f}_1) = \cL_{\mathrm{TM}}^{(k)}(\vec{f}_2)$ for all $k$, but $(\flow{\vec{f}_1}{0}{1})_{\#}\pi_0 \neq (\flow{\vec{f}_2}{0}{1})_{\#}\pi_0$.
\end{proof}

\subsection{\texorpdfstring{Proof of \cref{thm:tvbound} ($\Lma$ bounds the distributional gap)}{Proof of Theorem 2 (L\_MA bounds the distributional gap)}}
\label{app:proof_thm4}

We first establish a standard data-processing lemma for total variation.

\begin{lemma}[Data processing inequality for TV]
\label{lem:dpi}
For any measurable map $T: \R^d \to \R^d$ and probability measures $\mu, \nu$,
\begin{equation}
    \TV(T_\# \mu, T_\# \nu) \leq \TV(\mu, \nu).
\end{equation}
\end{lemma}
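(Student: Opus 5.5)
The plan is to argue directly from the definition of total variation as a supremum over measurable sets, using preimages under $T$. The ambient space is $\R^d$ with its Borel $\sigma$-field $\cF$, and $T$ is measurable, so for every $A \in \cF$ the preimage $T^{-1}(A)$ also lies in $\cF$. By definition of the pushforward, $T_\#\mu(A) = \mu(T^{-1}(A))$, and likewise for $\nu$.

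The key step is then a single chain of equalities and one inequality. For every $A\in\cF$,
\begin{equation*}
    \bigl|T_\#\mu(A) - T_\#\nu(A)\bigr| = \bigl|\mu(T^{-1}(A)) - \nu(T^{-1}(A))\bigr| \leq \sup_{B\in\cF}\bigl|\mu(B)-\nu(B)\bigr| = \TV(\mu,\nu).
\end{equation*}
The inequality holds because $T^{-1}(A)$ is one particular member of $\cF$. Taking the supremum over $A\in\cF$ on the left-hand side gives $\TV(T_\#\mu,T_\#\nu)\le \TV(\mu,\nu)$.

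There is essentially no obstacle. The only point needing care is measurability, which ensures that $T^{-1}(A)$ is an admissible set in the supremum defining $\TV(\mu,\nu)$. This is guaranteed by the hypothesis on $T$, and in the application to \cref{thm:tvbound} it is automatic because the flow maps are $C^1$ diffeomorphisms under \cref{asmp:regularity}.

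An alternative route, useful as a sanity check when densities exist, is the representation $\TV(\mu,\nu)=\tfrac12\int|p-q|$ together with Jensen's inequality applied to conditional expectations. The set-based argument above avoids needing densities, so I would present that one.
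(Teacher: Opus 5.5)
Your proof is correct and is the same as the paper's: both write $T_\#\mu(A)-T_\#\nu(A)=\mu(T^{-1}(A))-\nu(T^{-1}(A))$ and use measurability of $T$ to make $T^{-1}(A)$ an admissible set in the supremum defining $\TV(\mu,\nu)$. The paper states this as the inclusion $\{T^{-1}(A)\}\subseteq\mathcal{B}(\R^d)$, which is exactly your observation.
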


\begin{proof}
For any Borel set $A$, $T_\#\mu(A) = \mu(T^{-1}(A))$. Since $\{T^{-1}(A) : A \in \mathcal{B}(\R^d)\} \subseteq \mathcal{B}(\R^d)$,
\begin{align}
    \TV(T_\#\mu, T_\#\nu) &= \sup_A |T_\#\mu(A) - T_\#\nu(A)| \nonumber \\
    &= \sup_A |\mu(T^{-1}(A)) - \nu(T^{-1}(A))| \leq \TV(\mu, \nu). \qedhere
\end{align}
\end{proof}

\begin{proof}[Proof of \cref{thm:tvbound}]
Define $e_k := \TV(\ptheta[,t_k], \qpsi[,t_k])$.

\textbf{Base case.} Since the teacher and student share the initial distribution $\pi_0$, we have $e_0 = \TV(\pi_0, \qpsi[,0]) = \TV(\pi_0, \pi_0) = 0$.

\textbf{Inductive step.} For $k \geq 1$,
\begin{equation}
    e_k = \TV\!\left((\flow{\vf}{t_{k-1}}{t_k})_\# \ptheta[,t_{k-1}],\; \qpsi[,t_k]\right).
\end{equation}
Introduce $\ptilde[^{(k)}] = (\flow{\vf}{t_{k-1}}{t_k})_\# \pi_{t_{k-1}}$. By the triangle inequality,
\begin{equation}
    e_k \leq \underbrace{\TV\!\left((\flow{\vf}{t_{k-1}}{t_k})_\# \ptheta[,t_{k-1}],\; \ptilde[^{(k)}]\right)}_{\text{(I)}} + \underbrace{\TV\!\left(\ptilde[^{(k)}],\; \qpsi[,t_k]\right)}_{\text{(II)}}.
\end{equation}

\textbf{Bounding (I).} By \cref{lem:dpi},
\begin{equation}
    \text{(I)} \leq \TV(\ptheta[,t_{k-1}], \pi_{t_{k-1}}) \leq \TV(\ptheta[,t_{k-1}], \qpsi[,t_{k-1}]) + \TV(\qpsi[,t_{k-1}], \pi_{t_{k-1}}) = e_{k-1} + \delta_{k-1}.
\end{equation}

\textbf{Bounding (II).} By Pinsker's inequality,
\begin{equation}
    \text{(II)} \leq \sqrt{\tfrac{1}{2}\,\KL(\ptilde[^{(k)}] \| \qpsi[,t_k])} = \sqrt{\tfrac{1}{2}\,\Lma^{(k)}(\theta)}.
\end{equation}

Combining these two bounds gives
\begin{equation}
    e_k \leq e_{k-1} + \delta_{k-1} + \sqrt{\tfrac{1}{2}\,\Lma^{(k)}(\theta)}.
\end{equation}
Telescoping the recursion from $e_0 = 0$ yields
\begin{equation}
    e_K \leq \sum_{k=1}^K \sqrt{\tfrac{1}{2}\,\Lma^{(k)}(\theta)} + \sum_{k=0}^{K-1} \delta_k.
\end{equation}
Since $\delta_0 = \TV(\qpsi[,0], \pi_0) = 0$, the second sum equals $\sum_{k=1}^{K-1} \delta_k$.
\end{proof}

\subsection{Proof of \texorpdfstring{\cref{prop:liouville}}{Lemma 3} (Liouville representation)}
\label{app:proof_prop1}

The Liouville representation rewrites the KL divergence $\Lma^{(k)}$ as an expectation over $\pi_{t_{k-1}}$. This converts a distribution-level quantity into a sample-level expression that can be estimated by Monte Carlo.

\begin{proof}
By \cref{asmp:regularity}, the flow map $\flow{\vf}{t_{k-1}}{t_k}$ is a $C^1$-diffeomorphism. The change-of-variables formula gives
\begin{equation}
    \ptilde[^{(k)}](\flow{\vf}{t_{k-1}}{t_k}(x)) = \pi_{t_{k-1}}(x) \cdot \left|\det \nabla_x \flow{\vf}{t_{k-1}}{t_k}(x)\right|^{-1}.
\end{equation}
Taking logarithms and applying \cref{eq:instantaneous_change_of_variables} gives
\begin{align}
    \log \ptilde[^{(k)}]\!\left(\Xstu_{t_{k-1}\to t_k}(X;\theta)\right) &= \log \pi_{t_{k-1}}(X) - \int_{t_{k-1}}^{t_k} \divop_x \vf\!\left(\Xstu_{t_{k-1}\to \tau}(X;\theta), \tau\right)\dd\tau \nonumber \\
    & = \log \pi_{t_{k-1}}(X) + \DeltaStu_{t_{k-1}\to t_k}(X;\theta).
\end{align}
Substituting this identity into the definition of KL yields
\begin{align}
    \Lma^{(k)}(\theta) &= \E_{Y \sim \ptilde[^{(k)}]}\!\left[\log \ptilde[^{(k)}](Y) - \log \qpsi[,t_k](Y)\right] \nonumber\\
    &= \E_{X \sim \pi_{t_{k-1}}}\!\left[\log \pi_{t_{k-1}}(X) + \DeltaStu_{t_{k-1}\to t_k}(X;\theta) - \log \qpsi[,t_k]\!\left(\Xstu_{t_{k-1}\to t_k}(X;\theta)\right)\right],
\end{align}
where the second line changes variables from $Y = \Xstu_{t_{k-1}\to t_k}(X;\theta)$ back to $X \sim \pi_{t_{k-1}}$.
\end{proof}

\subsection{Proof of \texorpdfstring{\cref{prop:gradient}}{Theorem 4} (Gradient formula)}
\label{app:proof_prop2}

\begin{proof}
From \cref{prop:liouville}, the term $\log \pi_{t_{k-1}}(X)$ is independent of $\theta$. Hence,
\begin{equation}
    \nabla_\theta \Lma^{(k)}(\theta) = \E_{X \sim \pi_{t_{k-1}}}\!\left[\nabla_\theta \DeltaStu_{t_{k-1}\to t_k}(X;\theta) - \nabla_\theta \log \qpsi[,t_k]\!\left(\Xstu_{t_{k-1}\to t_k}(X;\theta)\right)\right].
\end{equation}
For the second term, the chain rule gives
\begin{equation}
    \nabla_\theta \log \qpsi[,t_k]\!\left(\Xstu_{t_{k-1}\to t_k}(X;\theta)\right) = \spsi\!\left(\Xstu_{t_{k-1}\to t_k}(X;\theta),\, t_k\right)^\top \nabla_\theta \Xstu_{t_{k-1}\to t_k}(X;\theta).
\end{equation}
The interchange of gradient and expectation follows from dominated convergence, since \cref{asmp:regularity} ensures that $\Xstu_{t_{k-1}\to t_k}(X;\theta)$ and $\DeltaStu_{t_{k-1}\to t_k}(X;\theta)$ have uniformly controlled $\theta$-derivatives.

Since $\spsi(\cdot, t_k)$ depends only on the fixed teacher parameters $\psi$ and not on $\theta$, the gradient does not propagate through $\spsi$. This gives the stop-gradient form in \cref{eq:stopgrad}.
\end{proof}

\section{Diffusion--Flow Correspondence and Score Derivation}
\label{app:diffusion_flow}
This section reviews the standard correspondence between diffusion models and continuous normalizing flows, covering arbitrary noise schedules and prediction parameterizations.
These identities both provide the velocity-to-score conversion needed to evaluate the regularizer and establish that it applies to arbitrary noise schedules and prediction parameterizations.
The correspondence reviewed below is well known and has been discussed by \citet{kingma2021variational}, \citet{karras2022edm}, \citet{albergo2023stochastic}, and \citet{kingma2023understanding}.
Throughout this section, $\theta$ denotes a generic network parameterization and does not necessarily refer to the student parameters used in the main text.

\subsection{General Framework}
\label{app:general_framework}

Let $Z_0 \sim \N(0,I_d)$ and $Z_1 \sim q_1$ be independent, and let $(a(t), b(t))$ be a noise schedule satisfying $a(0)=1, b(0)=0, a(1)=0, b(1)=1$. We write $\dot{a}(t) := \frac{\dd a(t)}{\dd t}$ and $\dot{b}(t) := \frac{\dd b(t)}{\dd t}$. The interpolation $X_t = a(t) Z_0 + b(t) Z_1$ for $t \in [0,1]$ induces a family of bridge distributions $\{p_t\}_{t\in[0,1]}$ between $\pi_0 = \N(0,I_d)$ and $q_1$, with
\begin{equation}
    p_t(x) = \int \N(x;\, b(t) x_1,\, a(t)^2 I_d)\, q_1(x_1)\,\dd x_1.
\end{equation}
These marginals are induced by a CNF with velocity field
\begin{equation}
    v^*(x,t) = \E[\dot{a}(t) Z_0 + \dot{b}(t) Z_1 \mid X_t = x].
    \label{eq:vstar}
\end{equation}

\paragraph{Velocity--score correspondence.}
Since $Z_0 = (X_t - b(t) Z_1)/a(t)$, substituting into \cref{eq:vstar} and applying the Tweedie identity $\E[Z_1 \mid X_t = x] = \frac{1}{b(t)}(x + a(t)^2 \nabla_x \log p_t(x))$ gives
\begin{equation}
    v^*(x,t) = \frac{\dot{b}(t)}{b(t)}\, x + a(t)^2 \left(\frac{\dot{b}(t)}{b(t)} - \frac{\dot{a}(t)}{a(t)}\right) \nabla_x \log p_t(x).
    \label{eq:velocity_score}
\end{equation}

\paragraph{Prediction parameterizations.}
Since $X_0 = Z_0$ and $X_1 = Z_1$, the velocity field can also be written as
\begin{equation}
    v^*(x,t) = \dot{a}(t) \,\underbrace{\E[Z_0 \mid X_t=x]}_{\epsilon\text{-prediction}} + \dot{b}(t)\, \underbrace{\E[Z_1 \mid X_t=x]}_{x_0\text{-prediction}}.
    \label{eq:prediction_types}
\end{equation}
Together with the identity $x = a(t) \E[Z_0 \mid X_t=x] + b(t) \E[Z_1 \mid X_t=x]$, any one of the velocity field, $\epsilon$-prediction, $x_0$-prediction, and the score function determines the other three. The differences among flow matching, $\epsilon$-prediction, $x_0$-prediction, and score matching therefore reduce to different weightings and reparameterizations of the same conditional expectation.

\subsection{VP Schedule Specialization}
\label{app:vp_schedule}

SD~1.5~\citep{rombach2022ldm} and SDXL~\citep{podell2023sdxl} both use $\epsilon$-prediction with a VP noise schedule satisfying $a(t) = \sigma(t)$, $b(t) = \alpha(t)$, and $\alpha^2(t) + \sigma^2(t) = 1$, so that the forward process takes the form $z_t = \alpha(t)\, z_0 + \sigma(t)\, \epsilon$. Writing $\phi(t) = \arcsin(\sigma(t))$, so that $\alpha(t) = \cos(\phi(t))$ and $\sigma(t) = \sin(\phi(t))$, the general identities in \cref{app:general_framework} specialize as follows.

\paragraph{Velocity field.}
From \cref{eq:prediction_types}, the velocity field corresponding to a network $\varepsilon_\theta$ that predicts $\E[Z_0 \mid X_t = x]$ is
\begin{equation}
    V_\theta(z_t, t) = \frac{\dot{\phi}(t)}{\alpha(t)}\left[\varepsilon_\theta(z_t, t) - \sigma(t)\, z_t\right].
    \label{eq:vp_velocity}
\end{equation}

\paragraph{Score function and $x_0$-prediction.}
From \cref{eq:velocity_score}, the score function is
\begin{equation}
    s_\theta(z_t, t) = -\frac{\varepsilon_\theta(z_t, t)}{\sigma(t)},
    \label{eq:vp_score}
\end{equation}
and the $x_0$-prediction is $X_\theta(z_t, t) = \frac{1}{\alpha(t)}[z_t - \sigma(t)\,\varepsilon_\theta(z_t, t)]$.

\paragraph{Backbone specifics.}
SD~1.5 and SDXL both use $\epsilon$-prediction with the same VP schedule described above, differing only in backbone architecture and default resolution.

\paragraph{DDIM as a rescaled Euler step.}
The velocity field in \cref{eq:vp_velocity} can be further simplified by a change of variables. Define the rescaled state $\bar{z}_t = z_t / \alpha(t)$ and the rescaled time $\gamma(t) = \sigma(t)/\alpha(t) = \tan(\phi(t))$. Differentiating $\bar{z}_t$ gives
\begin{equation}
    \mathrm{d}\bar{z}_t = \frac{1}{\alpha(t)}\bigl[V_\theta(z_t,t) + \dot{\phi}(t)\,\sigma(t)\,\bar{z}_t\bigr]\,\mathrm{d}t.
    \label{eq:rescaled_deriv}
\end{equation}
Substituting \cref{eq:vp_velocity} into \cref{eq:rescaled_deriv} and simplifying yields
\begin{equation}
    \mathrm{d}\bar{z}_t
    = \varepsilon_\theta(z_t, t)\;\mathrm{d}\gamma,
    \label{eq:ddim_ode}
\end{equation}
in which the $x_0$-prediction term cancels and the dynamics are driven solely by the noise prediction $\varepsilon_\theta$. A single Euler step of \cref{eq:ddim_ode} from $\gamma(r)$ to $\gamma(s)$, followed by the inverse rescaling $z_s = \alpha(s)\,\bar{z}_{\gamma(s)}$, recovers the standard DDIM
update~\citep{song2021ddim}:
\begin{equation}
    z_s
    = \alpha(s)\,\frac{z_r - \sigma(r)\,\varepsilon_\theta(z_r, r)}
                       {\alpha(r)}
      + \sigma(s)\,\varepsilon_\theta(z_r, r).
    \label{eq:ddim_update}
\end{equation}
Therefore, using the DDIM sampler to compute the teacher endpoint $Y = \flow{\vg}{t_{k-1}}{t_k}(X)$ in the training procedure (\cref{alg:training}, line~6) amounts to solving the teacher ODE with an Euler step in the $(\bar{z},\gamma)$ coordinate system.

\section{KL Estimator Details}
\label{app:kl_details}

This section provides implementation details for the marginal-alignment regularizer described in \cref{sec:method}. The augmented ODE system and the teacher score derivation are given in \cref{sec:method} and \cref{app:vp_schedule}, respectively. Below we describe the divergence estimator and the ODE solver used in practice.

\paragraph{Hutchinson trace estimator with finite differences.}
We estimate the divergence $\divop_x \vf$ in the augmented ODE using the Hutchinson identity. For any random vector $\epsilon \in \R^d$ satisfying $\E[\epsilon]=0$ and $\E[\epsilon\epsilon^\top]=I_d$,
\begin{equation}
    \divop_x \vf(x,t)
    =
    \Tr(\nabla_x \vf(x,t))
    =
    \E_{\epsilon}\!\left[\epsilon^\top \nabla_x \vf(x,t)\epsilon\right].
    \label{eq:hutchinson_identity}
\end{equation}
Following the finite-difference variant used in JKO-iFlow~\citep{xu2023normalizing}, we approximate the directional derivative by
\begin{equation}
    \nabla_x \vf(x,t)\epsilon
    \approx
    \frac{\vf(x+\sigma\epsilon,t)-\vf(x,t)}{\sigma},
    \label{eq:finite_difference_jvp}
\end{equation}
which yields the Monte Carlo estimator
\begin{equation}
    \divop_x \vf(x,t)
    \approx
    \frac{1}{N_e}\sum_{i=1}^{N_e}
    \epsilon_i^\top
    \frac{\vf(x+\sigma\epsilon_i,t)-\vf(x,t)}{\sigma}.
    \label{eq:hutchinson_fd}
\end{equation}
Here $\epsilon_i \sim \N(0,I_d)$ are i.i.d.\ samples, and $\sigma=\sigma_0/\sqrt d>0$ is the finite-difference scale. In our experiments, we use $N_e=1$, with $\sigma_0=10$ for SD~1.5 and $\sigma_0=1$ for SDXL.

\paragraph{ODE solver.}
In each time window $W_k=[t_{k-1},t_k]$, we solve the augmented ODE with a 4-step fixed-step-size RK4 solver and compute gradients via the adjoint method~\citep{chen2018neural}.

\section{Additional Experimental Results}
\label{app:additional_results}

This section reports supplementary experiments that complement the main results in \cref{sec:experiments}. We first consider a latent-space flow-matching (LFM) model pretrained on CelebA-HQ (\cref{app:lfm}) to examine the marginal-alignment regularizer under different initialization conditions, and then provide qualitative comparisons on SD~1.5 (\cref{app:qualitative_sd15}).

\subsection{LFM model on CelebA-HQ}
\label{app:lfm}

The main experiments in \cref{sec:experiments} initialize the student from a converged PeRFlow checkpoint. Here we conduct complementary experiments on CelebA-HQ $256{\times}256$ with the pretrained LFM model of \citet{dao2023flow}, which directly predicts velocity using a UNet architecture. The student adopts the same architecture and parameterization. We consider two settings: initializing from the teacher checkpoint (Setting~A) and initializing from a converged PeRFlow checkpoint (Setting~B). As a reference, the pretrained teacher achieves an FID of 7.35 with its default 32-step Euler sampler.

\paragraph{Setting A: Training from teacher initialization.}
Starting from the pretrained LFM checkpoint, we train PeRFlow ($K{=}4$) and MA-Reflow ($K{=}4$) respectively for 100K iterations with identical hyperparameters, except that MA-Reflow additionally uses the marginal-alignment loss ($\lambda{=}0.1$, evaluated once every nine PeRFlow updates). \Cref{tab:lfm_a} reports the best FID achieved during training for each method.

\begin{table}[ht]
\centering
\caption{LFM on CelebA-HQ: training from teacher initialization (Setting~A). Best FID over training. ``LFM (direct)'' denotes the pretrained teacher sampled with the corresponding number of steps without distillation.}
\label{tab:lfm_a}
\small
\begin{tabular}{lccc}
\toprule
Steps & LFM (direct) & PeRFlow & MA-Reflow \\
\midrule
4-step & 54.41 & 18.17 & \textbf{17.25} \\
8-step & 21.29 & \textbf{12.87} & 13.04 \\
\bottomrule
\end{tabular}
\end{table}

\paragraph{Setting B: Fine-tuning from converged PeRFlow.}
Starting from the best PeRFlow checkpoint in Setting~A, we continue training with MA-Reflow and, as a control, with PeRFlow for an additional 100K iterations. \Cref{tab:lfm_b} reports the best FID achieved during training. Continuing with PeRFlow degrades FID, suggesting that further trajectory-matching updates no longer improve generation quality at this checkpoint. In contrast, MA-Reflow further improves FID under both 4-step and 8-step sampling.

\begin{table}[ht]
\centering
\caption{LFM on CelebA-HQ: fine-tuning from converged PeRFlow (Setting~B). Best FID over training. ``PeRFlow (init)'' denotes the starting checkpoint from Setting~A.}
\label{tab:lfm_b}
\small
\begin{tabular}{lcccc}
\toprule
Steps & LFM (direct) & PeRFlow (init) & PeRFlow (cont.) & MA-Reflow \\
\midrule
4-step & 54.41 & 18.17 & 23.87 & \textbf{16.78} \\
8-step & 21.29 & 12.87 & 13.85 & \textbf{11.98} \\
\bottomrule
\end{tabular}
\end{table}

Taken together, the two settings show that marginal-alignment regularization provides consistent benefits under different initialization conditions, in line with the complementarity between trajectory matching and marginal alignment identified in \cref{thm:underdetermination,thm:tvbound}.

\subsection{Qualitative Comparison on SD~1.5}
\label{app:qualitative_sd15}

\begin{figure}[htbp]
    \centering
    \includegraphics[width=0.635\linewidth]{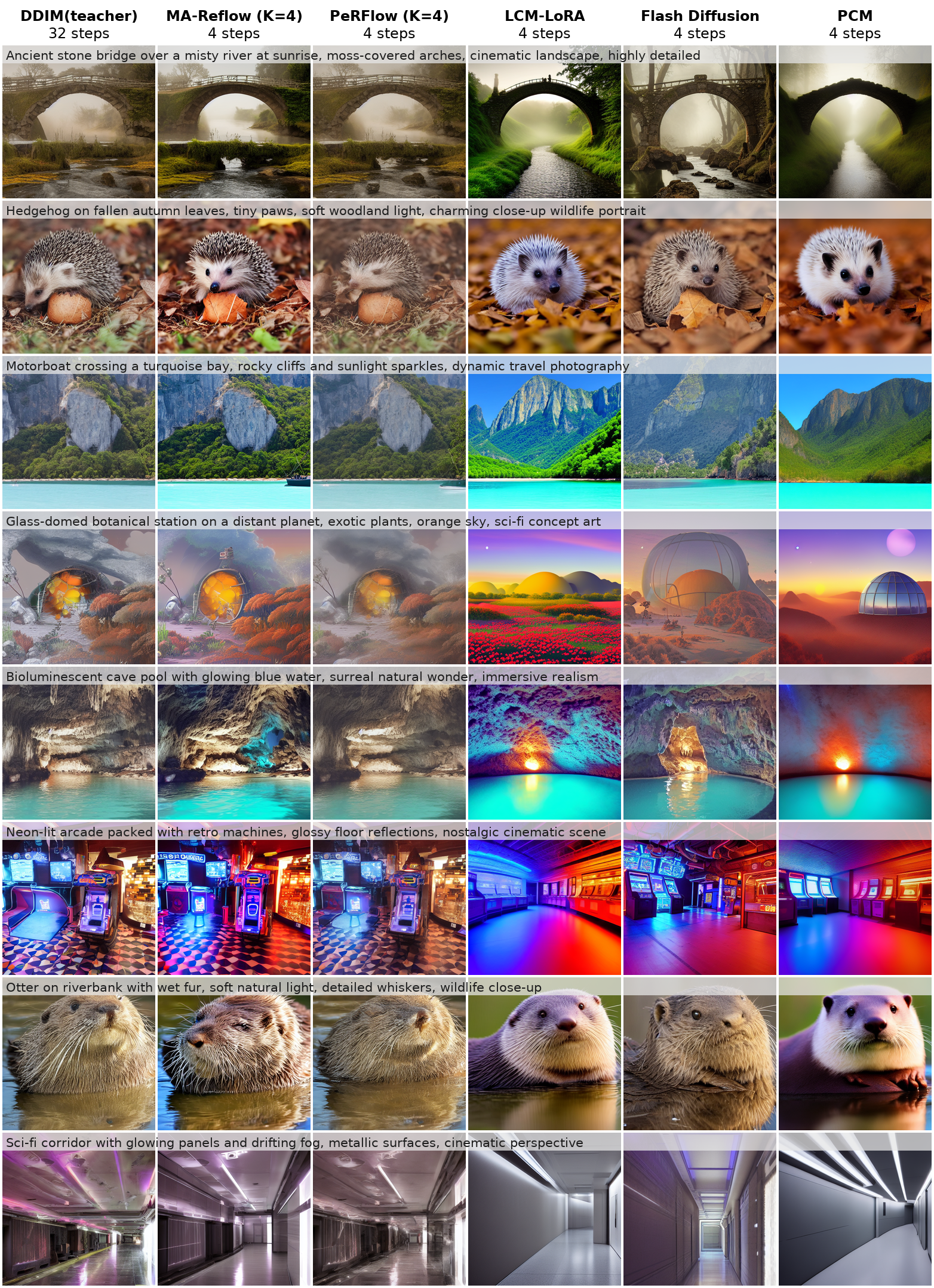}
    \caption{Qualitative comparison on SD~1.5 under 4-step sampling. Each row shares the same prompt and initial noise. Our method achieves better visual quality and preserves richer fine-grained details than other distillation baselines.}
    \label{fig:qualitative_sd15}
\end{figure}

\section{Broader Impacts}
\label{app:broader_impacts}

This work studies efficient distillation methods for diffusion and flow-based generative models. Our experiments cover both unconditional image generation and conditional text-to-image synthesis, though the proposed approach may extend to other modalities such as audio and video. By reducing the number of sampling steps required for high-quality generation, our method can lower the computational cost and energy consumption of deploying these models. At the same time, more efficient generators could lower the barrier to producing disinformation or non-consensual imagery. This risk is shared by generative-model acceleration methods in general, and existing countermeasures such as output watermarking, provenance tracking, and synthetic-content detection remain fully applicable. All our experiments build on publicly available pretrained models and datasets.

\end{document}